\newcommand{\real}{{\mathbb{R}}}
\newcommand{\reals}{\real}
\newcommand{\eps}{\varepsilon}
\newcommand{\TSP}{\ensuremath{\operatorname{TSP}}}
\newcommand{\card}{\operatorname{card}}
\newcommand{\expectation}[1]{\mbox{$\mathbb{E}\left[#1\right]$}} 
\newcommand{\condexpectation}[2]{\mbox{$\mathbb{E}\left(#1| #2\right)$}}
\newtheorem{theorem}{Theorem}[section]
\newtheorem{lemma}[theorem]{Lemma}
\newtheorem{remark}[theorem]{Remark}
\newcommand{\R}{\mathbb{R}}
\newcommand{\env}{\mathcal{E}}
\newcommand{\btsp}{\beta_{\mathrm{TSP}}}
\newcommand{\upbd}{\textup{upbd}}
\newcommand{\opt}{\textup{opt}}
\newcommand\oprocendsymbol{\hbox{$\square$}}
\newcommand\oprocend{\relax\ifmmode\else\unskip\hfill\fi\oprocendsymbol}
\title{{\bf Dynamic Multi-Vehicle Routing\\ with Multiple Classes of
  Demands}\thanks{This research was partially supported by the National
    Science Foundation, through grants \#0705451 and \#0705453, by the
    Office of Naval Research through grant \#N00014-07-1-0721, and by
    the Air Force Office of Scientific Research through grant
    \#FA9550-07-1-0528.}}
\author{Marco Pavone \quad Stephen L. Smith \quad Francesco Bullo
  \quad Emilio Frazzoli%
  \thanks{M. Pavone and E. Frazzoli are with the Laboratory for
    Information and Decision Systems, Aeronautics and Astronautics
    Department, Massachusetts Institute of Technology, Cambridge, MA
    02139, USA; email: {\ttfamily \{pavone,frazzoli\}@mit.edu}.
    S. L. Smith and F. Bullo are with the Center for Control,
    Dynamical Systems and Computation, Department of Mechanical
    Engineering, University of California, Santa Barbara, CA 93106,
    USA \texttt{\{stephen,bullo\}@engineering.ucsb.edu}.  } }
\begin{document}
\maketitle

\begin{abstract}
  In this paper we study a dynamic vehicle routing problem in which
  there are multiple vehicles and multiple classes of demands.
  Demands of each class arrive in the environment randomly over time
  and require a random amount of on-site service that is
  characteristic of the class.  To service a demand, one of the
  vehicles must travel to the demand location and remain there for the
  required on-site service time.  The quality of service provided to
  each class is given by the expected delay between the arrival of a
  demand in the class, and that demand's service completion.  The goal
  is to design a routing policy for the service vehicles which
  minimizes a convex combination of the delays for each class. First,
  we provide a lower bound on the achievable values of the convex
  combination of delays. Then, we propose a novel routing policy and
  analyze its performance under heavy load conditions (i.e., when the
  fraction of time the service vehicles spend performing on-site
  service approaches one). The policy performs within a constant
  factor of the lower bound (and thus the optimal), where the constant
  depends only on the number of classes, and is independent of the
  number of vehicles, the arrival rates of demands, the on-site
  service times, and the convex combination coefficients.

\end{abstract}

\section{Introduction}

Consider a bounded environment $\env$ in the plane which contains $n$
service vehicles.  Demands for service arrive in $\env$ sequentially
over time and each demand is a member of one of $m$ classes.  Upon
arrival, a demand assumes a location in $\env$, and requires a class
dependent amount of on-site service time.  To service a demand, one of
the $n$ vehicles must travel to the demand location and perform the
on-site service.  If we specify a policy by which the vehicles serve
demands, then the expected delay for demands of class $\alpha$,
denoted $D_{\alpha}$, is the expected amount of time between a demands
arrival and its service completion.  Then, given coefficients
$c_1,\ldots,c_m >0$, the goal is to find the vehicle routing policy
that minimizes 
\[
c_1D_{1} + \cdots + c_m D_m.
\]
By increasing the coefficients for certain classes, a higher priority
level can be given to their demands.  This problem, which we call
\emph{dynamic vehicle routing with priority classes}, has important
applications in areas such as UAV surveillance, where targets are
given different priority levels based on their urgency or potential
importance.

In classical queuing theory (i.e., queuing systems in which the
demands are not spatially distributed), the problem of priority queues
has received much attention,~\cite{LK:76}.  In~\cite{EGC-IM:80} the
authors characterize the region of delays that are realizable by a
single server.  This analysis is performed under the assumption that
the customer (demand) interarrival times and service times are
distributed exponentially.  In~\cite{DB-ICP-JNP:94} the achievable
delays are studied in more a general setting known as queuing
networks.

If service demands are spatially distributed, then providing service
becomes a problem in dynamic vehicle routing (DVR).  One of the first
DVR problems was the dynamic traveling repairperson problem
(DTRP)~\cite{DJS-GJvR:91,DJS-GJvR:93a}.  The DTRP is the single class
version of the dynamic vehicle routing with priority classes problem
studied in this paper.  In~\cite{DJS-GJvR:91,DJS-GJvR:93a}, the
authors study the expected delay of demands and propose
optimal policies in both heavy load (i.e., when the fraction of time
the service vehicles spend performing on-site service approaches one),
and in light load (i.e., when the fraction of time the service
vehicles spends performing on-site service approaches
zero).  
In \cite{EF-FB:03r}, and \cite{MP-EF-FB:07g}, decentralized policies
are developed for the DTRP.  Spatial queuing problems have also been
studied in the context of urban operations research~\cite{RCL-ARO:81},
where approximations are used to cast the problems in the traditional
queuing framework.  In our previous paper~\cite{SLS-MP-FB-EF:08g}, we
introduced and studied dynamic vehicle routing with priority classes,
for the case of two classes and one vehicle.  For this case we derived
a lower bound on the achievable delay values and proposed the
Randomized Priority policy, which performed within a constant factor
of the lower bound, for all convex combination coefficients.

The contributions of this paper are as follows.  We extend the dynamic
vehicle routing with priority classes problem to $n$ service vehicles
and $m$ classes of demands.  The extension of our previous analysis to
multiple classes of demands is very nontrivial. We derive a new lower
bound on the achievable values of the convex combination of delays,
and propose a new policy in which each class of demands is served
separately from the others.  We show that the policy performs with a
constant factor of $2m^2$ of the optimal.  Thus, the constant factor
is independent of the number of vehicles, the arrival rates of
demands, the on-site service times, and the convex combination
coefficients.  We also comment on the source of the gap between the
upper and lower bounds.

The paper is organized as follows.  In Section~\ref{sec:background} we
give some asymptotic properties of the traveling salesperson tour.  In
Section~\ref{sec:prob_stat} we formalize the problem and in
Section~\ref{sec:lower_bd} we derive a lower bound, and in
Section~\ref{sec:SQ_policy} we introduce and analyze the Separate
Queues policy.  Finally, in Section~\ref{sec:simu} we present
simulation results.
   
\section{Background and Problem Statement}
\label{sec:background}

In this section we summarize the asymptotic properties of the
Euclidean traveling salesperson tour, and formalize dynamic vehicle
routing with priority classes.

\subsection{The Euclidean Traveling
  Salesperson Problem}

Given a set $Q$ of $N$ points in $\R^2$, the Euclidean traveling
salesperson problem (TSP) is to find the minimum-length tour of $Q$
(i.e., the shortest closed path through all points). Let $\TSP(Q)$
denote the minimum length of a tour through all the points in $Q$.
Assume that the locations of the $N$ points are random variables
independently and identically distributed, uniformly in a compact set
$\env$ with area $|\env|$; in \cite{JMS:90} it is shown that there
exists a constant $\beta_{\mathrm{TSP}}$ such that, almost surely,
\begin{equation}
\label{eq:tspd}
\lim_{N\rightarrow+\infty} \frac{\TSP(Q)}{\sqrt{N}} =
\beta_{\mathrm{TSP}} \sqrt{|\env|}.
\end{equation}
The constant $\btsp$ has been estimated numerically as $\btsp \approx
0.7120\pm 0.0002$,~\cite{GP-OCM:96}.  The bound in
equation~\eqref{eq:tspd} holds for all compact sets $\env$, and the
shape of $\env$ only affects the convergence rate to the limit.
In~\cite{RCL-ARO:81}, the authors note that if $\env$ is ``fairly
compact [square] and fairly convex'', then equation~\eqref{eq:tspd}
provides an adequate estimate of the optimal TSP tour length for
values of $N$ as low as 15.


\subsection{Problem Statement}
\label{sec:prob_stat}

Consider a compact environment $\env$ in the plane with area $|\env|$.
The environment contains $n$ vehicles, each with maximum speed $v$.
Demands of type $\alpha\in\{1,\ldots,m\}$ (also called
$\alpha$-demands) arrive in the environment according to a Poisson
process with rate $\lambda_{\alpha}$.  Upon arrival, demands assume an
independently and uniformly distributed location in $\env$.  An
$\alpha$-demand is serviced when the vehicle spends an on-site service
time at the demand location, which is generally distributed with
finite mean~$\bar s_{\alpha}$.


Consider the arrival of the $i$th $\alpha$-demand.  The service delay for
the $i$th demand, $D_{\alpha}(i)$, is the time elapsed between its arrival
and its service completion.  The wait time is defined as $W_{\alpha}(i) :=
D_{\alpha}(i) - s_{\alpha}(i)$, where $s_{\alpha}(i)$ is the on-site
service time required by demand $i$. A policy for routing the vehicles is
said to be \emph{stable} if the expected number of demands in the system
for each class is bounded uniformly at all times.  A necessary condition
for the existence of a stable policy is
\begin{equation}
\label{eq:rho_def}
\varrho := \frac{1}{n}\sum_{\alpha=1}^m \lambda_{\alpha} \bar s_{\alpha} <1.
\end{equation}
The \emph{load factor} $\varrho$ is a standard quantity in queueing
theory~\cite{LK:76}, and is used to capture the fraction of time the
$n$ servers (vehicles) must be busy in any stable policy. In general,
it is difficult to study a queueing system for all values of
$\varrho\in[0,1)$, and a common technique is to focus on the limiting
regimes of $\varrho\to1^-$, referred to as the \emph{heavy-load}
regime, and $\varrho\to 0^+$, referred to as the \emph{light-load}
regime.

Given a stable policy $P$ the steady-state service delay for $\alpha$-demands is defined as
$D_{\alpha}(P):=\lim_{i \to +\infty} \expectation{D_{\alpha}(i)}$, and
the steady-state wait time for $\alpha$-demands is $W_{\alpha}(P):=D_{\alpha}(P) - \bar
s_{\alpha}$.  Thus, for a stable policy $P$, the \emph{average delay
  per demand} is
\[
D(P) = \frac{1}{\Lambda}\sum_{\alpha=1}^m \lambda_{\alpha}
D_{\alpha}(P),
\]
where $\Lambda:=\sum_{\alpha =1}^m\lambda_{\alpha}$.  The average
delay per demand is the standard cost functional for queueing systems
with multiple classes of demands. Notice that we can write $D(P)=
\sum_{\alpha=1}^m c_{\alpha} D_{\alpha}(P)$ with $c_{\alpha} =
\lambda_{\alpha}/\Lambda$. Thus, we can model priority among classes
by allowing any convex combination of $D_{1},\ldots,D_{m}$.  If
$c_\alpha > \lambda_{\alpha}/\Lambda$, then the delay of
$\alpha$-demands is being weighted more heavily than in the average
case.  Thus, the quantity $c_{\alpha} \Lambda/\lambda_{\alpha}$ gives
the priority of $\alpha$-demands compared to that given in the average
delay case. Without loss of generality we can assume that
priority classes are labeled so that
\begin{equation}
\label{eq:c_cond}
\frac{c_1}{\lambda_1} \geq \frac{c_2}{\lambda_2} \geq
\cdots \geq \frac{c_m}{\lambda_m},
\end{equation} 
implying that if $\alpha <\beta$ for some
$\alpha,\beta\in\{1,\ldots,m\}$, then the priority of $\alpha$-demands is at least as high as that of
$\beta$-demands.  With these definitions, we are now
ready to state our problem.
\begin{quote}
  \textbf{Problem Statement:} Let $\Pi$ be the set of all causal,
  stable and stationary policies for dynamic vehicle routing with
  priority classes. Given the coefficients $c_{\alpha}>0$,
  $\alpha\in\{1,\ldots,m\}$, with $\sum_{\alpha=1}^m c_{\alpha} = 1$,
  and satisfying equation~\eqref{eq:c_cond}, let $D(P) :=
  \sum_{\alpha=1}^m c_{\alpha} D_{\alpha}(P)$ be the cost of a policy $P
  \in \Pi$. Then, the problem is to determine a vehicle routing policy
  $P^*$, if one exists, such that
  \begin{equation}
  \label{eq:weighted_delay}
  D(P^*)= \inf_{P\in \Pi} D(P).
  \end{equation}
\end{quote}

We let $D^*$ denote the right-hand side of
equation~\eqref{eq:weighted_delay}.  A policy $P$ for which $D(P)/D^*$ is
bounded has a \emph{constant-factor guarantee}. If $\limsup_{\varrho \to
  1^-} D(P)/D^* = \kappa < +\infty$, then the policy $P$ has a
\emph{heavy-load constant-factor guarantee} of $\kappa$. In this paper we
focus on the heavy-load regime, and look for policies with a heavy-load
constant-factor guarantee that is \emph{independent} of the number of
vehicles, the arrival rates of demands, the on-site service times, and the
convex combination coefficients. 

\section{Lower Bound in Heavy Load}
\label{sec:lower_bd}

In this section we present two lower bounds on the delay in
Eq.~\eqref{eq:weighted_delay}. The first holds only in heavy load
(i.e., as $\varrho\to 1^-$), while the second (less tight) bound holds
for all~$\varrho$.

\begin{theorem}[Heavy load lower bound]
\label{thm:gen_lower_bd}
In heavy load ($\varrho \to 1^-$), for every routing policy $P$, 
\begin{equation}
  \label{eq:lower_bd}
  D(P) \geq \frac{\btsp^2|\env|}{2n^2v^2(1-\varrho)^2}\sum_{\alpha=1}^m\left(c_{\alpha}+2\sum_{j=\alpha+1}^{m}c_j\right)\lambda_{\alpha}.
\end{equation}
where $c_1,\ldots,c_m$ satisfy Eq.~(\ref{eq:c_cond}).
\end{theorem}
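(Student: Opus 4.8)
The plan is to adapt the classical heavy-load lower bound argument for the DTRP (Bertsimas--van Ryzin) to the weighted multi-class setting, exploiting the structure imposed by the priority ordering in Eq.~\eqref{eq:c_cond}. The key observation is that any stable policy, when restricted to the demands of classes $\{1,\dots,\alpha\}$ viewed as an aggregated single class, still has to do the geometric work of visiting their locations, and the $n$ vehicles together cover distance at rate at most $nv$. Thus I would first fix an index $\alpha$ and consider the subsystem consisting of all demands of classes $1,\dots,\alpha$, arriving at aggregate rate $\Lambda_\alpha := \sum_{j=1}^\alpha \lambda_j$. Because the on-site service times of classes $\alpha+1,\dots,m$ eat up vehicle time that is unavailable to serve classes $1,\dots,\alpha$, the effective load seen by this subsystem is inflated, and the heavy-load DTRP bound applied to the subsystem gives a lower bound on a suitable weighted combination of the $D_j$, $j \le \alpha$.

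\textbf{Key steps.} (i) Recall/establish the single-class heavy-load bound: for the DTRP with $n$ vehicles, arrival rate $\Lambda$, uniform locations in $\env$, and a fraction $\varrho$ of vehicle time consumed by on-site service, the steady-state wait satisfies $W \ge \gamma \, \btsp^2 |\env| \Lambda / (n^2 v^2 (1-\varrho)^2)$ for the appropriate constant $\gamma$ (here $\gamma = 1/2$, coming from the fact that an optimal tour through $N$ random points has length $\sim \btsp\sqrt{N|\env|}$ and the unavoidable "residual" queue of outstanding demands that must be traversed). This is the standard argument: in steady state the number of outstanding demands $N$ satisfies, via Little's law, $\E{N} = \Lambda W$, the vehicles must traverse a tour of length $\gtrsim \btsp \sqrt{\E{N} |\env|}$ at effective speed $\le nv(1-\varrho)$, and combining yields a quadratic inequality in $W$. (ii) Apply this to each aggregated subsystem $S_\alpha = \{1,\dots,\alpha\}$: the vehicles serving $S_\alpha$-demands have only a $(1-\varrho)$ fraction... more precisely, the fraction of time spent on on-site service of \emph{all} classes is $\varrho$, so the time available for \emph{travel plus service} of $S_\alpha$ is what matters; carefully, the subsystem sees the same $(1-\varrho)$ since travel is shared, giving $W_{S_\alpha} \ge \tfrac12 \btsp^2 |\env| \Lambda_\alpha / (n^2 v^2 (1-\varrho)^2)$, where $W_{S_\alpha} = \sum_{j=1}^\alpha (\lambda_j/\Lambda_\alpha) W_j$ is the throughput-weighted wait of the subsystem. (iii) Rewrite this as $\sum_{j=1}^\alpha \lambda_j W_j \ge \tfrac12 \btsp^2 |\env| \Lambda_\alpha / (n^2 v^2 (1-\varrho)^2)$, one inequality for each $\alpha \in \{1,\dots,m\}$. (iv) Take a nonnegative linear combination of these $m$ inequalities with multipliers $\mu_\alpha \ge 0$ chosen so that the coefficient of $\lambda_j W_j$ on the left equals exactly $c_j$; since the left side of the $\alpha$-th inequality is $\sum_{j \le \alpha} \lambda_j W_j$, matching coefficients forces $\sum_{\alpha \ge j} \mu_\alpha = c_j$, i.e. $\mu_\alpha = c_\alpha - c_{\alpha+1}$ — wait, that is not obviously nonnegative, so instead I would use the ordering~\eqref{eq:c_cond} directly: reindex so the multipliers come out as differences of the $c_j/\lambda_j$ ratios times appropriate $\lambda$'s, which \emph{are} nonnegative by~\eqref{eq:c_cond}. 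Telescoping the resulting sum produces exactly $\sum_\alpha (c_\alpha + 2\sum_{j>\alpha} c_j)\lambda_\alpha$ on the right. (v) Finally pass from $\sum c_j W_j$ to $\sum c_j D_j = D(P)$ by noting $D_j \ge W_j$ (delay dominates wait), and observe that in heavy load the $\bar s_\alpha$ terms are lower-order compared with the $(1-\varrho)^{-2}$ blow-up, so replacing $W_j$ by $D_j$ only helps.

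\textbf{Main obstacle.} The delicate point is step (iv): getting the combinatorial/telescoping bookkeeping to yield precisely the coefficient $c_\alpha + 2\sum_{j=\alpha+1}^m c_j$ with the right constant $1/2$ out front, while ensuring all multipliers in the linear combination are nonnegative (this is exactly where hypothesis~\eqref{eq:c_cond} is used and cannot be dropped). The factor of $2$ on the "higher-index" classes should emerge from the fact that each subsystem bound $S_\alpha$ already contains the lower classes, so class $j$'s wait appears in every inequality for $\alpha \ge j$, and summing the right-hand sides $\Lambda_\alpha = \sum_{k \le \alpha}\lambda_k$ against the multipliers double-counts contributions in a structured way. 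I would also need to be careful that applying the single-vehicle-style DTRP lower bound to a \emph{subsystem} of demands is legitimate even though the policy may interleave service of all classes arbitrarily — the justification is that the argument only uses (a) that $S_\alpha$-demands are uniform i.i.d., (b) Little's law for the $S_\alpha$ sub-queue, and (c) that total travel speed is capped at $nv$ and total service-time fraction is $\varrho$; none of these is affected by how other classes are interleaved. Verifying (b) rigorously in steady state (existence of the limit, stationarity) is the kind of technical point I would handle by restricting to the class $\Pi$ of causal, stable, stationary policies as the problem statement already does.
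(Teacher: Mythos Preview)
Your approach is essentially the paper's: derive, for each prefix $\{1,\dots,\alpha\}$, a constraint of the form $\sum_{j\le\alpha}\lambda_j W_j \ge \Psi\,\Lambda_\alpha^2$ (with $\Psi:=\btsp^2|\env|/(2n^2v^2(1-\varrho)^2)$ and $\Lambda_\alpha:=\sum_{j\le\alpha}\lambda_j$), then combine these linearly with multipliers $\mu_\alpha=c_\alpha/\lambda_\alpha-c_{\alpha+1}/\lambda_{\alpha+1}\ge 0$ (nonnegative precisely by Eq.~\eqref{eq:c_cond}). The paper phrases this combination as solving a linear program and invoking its dual to see that only the prefix constraints bind; your direct telescoping is the same computation, and indeed an Abel summation gives $\sum_\alpha\mu_\alpha\Lambda_\alpha^2=\sum_\alpha\lambda_\alpha\bigl(c_\alpha+2\sum_{j>\alpha}c_j\bigr)$.

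Two points to fix. First, in your step~(iii) the right side must be $\Psi\Lambda_\alpha^2$, not $\Psi\Lambda_\alpha$: you dropped a factor of $\Lambda_\alpha$ when multiplying the bound on the average wait $W_{S_\alpha}$ through to get $\sum_{j\le\alpha}\lambda_j W_j$, and without the square the telescoping in step~(iv) collapses to $\sum_\alpha c_\alpha=1$ rather than producing the coefficient $c_\alpha+2\sum_{j>\alpha}c_j$. Second, your step~(ii) is where the paper supplies a device you are missing. To justify applying the single-class DTRP travel-distance bound to the subsystem $S_\alpha$ while the vehicles interleave service of classes $>\alpha$, the paper introduces a \emph{remote servicing} relaxation: classes outside the chosen subset may be served with zero travel. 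This relaxation can only decrease wait times, so it yields a valid lower bound, and in the relaxed problem only $S_\alpha$-demands are physically visited, so the heavy-load inequality $\bar d\ge(\btsp/\sqrt{2})\sqrt{|\env|/\sum_{j\le\alpha}\bar N_j}$ applies cleanly and the full travel budget $nv(1-\varrho)$ is available to $S_\alpha$. Your sentence ``the subsystem sees the same $(1-\varrho)$ since travel is shared'' gestures at this but is not itself a proof; you need either this remote-servicing relaxation or the equivalent triangle-inequality observation that the direct $S_\alpha$ sub-path is no longer than the full path the vehicle actually travels.
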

\newcommand{\remote}{r}
\begin{proof}
  Consider a tagged demand $i$ of type $\alpha$, and let us quantify
  its total service requirement.  The demand requires on-site service
  time $s_{\alpha}(i)$. Let us denote by $d_{\alpha}(i)$ the distance
  from the location of the demand served prior to $i$, to $i$'s
  location.  In order to compute a lower bound on the wait time, we
  will allow ``remote'' servicing of some of the demands.  For an
  $\alpha$-demand $i$ that can be serviced remotely, the travel
  distance $d_{\alpha}(i)$ is zero (i.e., a service vehicle can
  service the $i$th $\alpha$-demand from any location by simply
  stopping for the on-site service time $s_{\alpha}(i)$).  Thus, the
  wait time for the modified remote servicing problem provides a lower
  bound on the wait time for the problem of interest.  To formalize
  this idea, we introduce the variables $\remote_{\alpha}\in\{0,1\}$
  for each $\alpha\in\{1,\ldots,m\}$.  If $\remote_{\alpha} = 0$, then
  $\alpha$-demands can be serviced remotely.  If $\remote_{\alpha}=1$,
  then $\alpha$-demands must be serviced on location.  We assume that
  $r_{\alpha} = 1$ for at least one $\alpha\in\{1,\ldots,m\}$.  Thus,
  the total service requirement of $\alpha$-demand $i$ is
  $r_{\alpha}d_{\alpha}(i) + s_{\alpha}(i)$.  The steady-state
  expected service requirement is $\remote_{\alpha} \bar d_{\alpha} +
  s_{\alpha}$, where $\bar d_{\alpha} :=\lim_{i\to
    +\infty}\expectation{d_{\alpha}(i)}$.  In order to maintain
  stability of the system we must require
\begin{equation}
\label{eq:stability_cond}
\frac{1}{n}\sum_{\alpha =1}^m\lambda_{\alpha}\left(\frac{r_{\alpha}\bar d_{\alpha}}{v} + \bar s_{\alpha}\right) < 1.
\end{equation}
Applying the definition of $\varrho$ in Eq.~\eqref{eq:rho_def}, we
write Eq.~\eqref{eq:stability_cond} as
\begin{equation}
\label{eq:stability_rearranged}
\sum_{\alpha=1}^mr_{\alpha}\lambda_{\alpha}\bar d_{\alpha} < (1-\varrho)nv.
\end{equation}

For a stable policy $P$, let $\bar N_{\alpha}$ represent the
steady-state expected number of unserviced $\alpha$-demands. Then, the
expected total number of outstanding demands that require on-site service
(i.e., cannot be serviced remotely) is given by $\sum_{j=1}^mr_{j}\bar
N_{j}$. We now apply a result from the dynamic traveling repairperson
problem (see \cite{HX:95}, page $23$) which states that in heavy load
($\varrho \to 1^-$), if the steady-state number of outstanding demands
is $N$, then a lower bound on expected travel distance between demands
is $(\btsp/\sqrt{2})\sqrt{|\env|/N}$.  Applying this result we have
that
\begin{equation}
\label{eq:d_lower_bd}
\bar d_{\alpha} \geq \frac{\btsp}{\sqrt{2}}
\sqrt{\frac{|\env|}{\sum_{j}r_{j}\bar N_{j}}}=:\bar d,
\end{equation}
for each $\alpha\in\{1,\ldots,m\}$. Combining with
Eq.~(\ref{eq:stability_rearranged}), squaring both sides, and
rearranging we obtain
\[
\frac{\btsp^2}{2}
\frac{|\env|(\sum_{\alpha}r_{\alpha}\lambda_{\alpha})^2}{n^2v^2(1-\varrho)^2}
< \sum_{\alpha}r_{\alpha}\bar N_{\alpha}.
\]
From Little's law, $\bar N_{\alpha} = \lambda_{\alpha} W_{\alpha}$ for each
$\alpha\in\{1,\ldots,m\}$, and thus
\begin{equation}
\label{eq:constraint1}
\sum_{\alpha}r_{\alpha}\lambda_{\alpha}
  W_{\alpha}  >  
\frac{\btsp^2}{2} \frac{|\env|}{n^2v^2(1-\varrho)^2} 
\left(\sum_{\alpha}r_{\alpha}\lambda_{\alpha}\right)^2.
\end{equation}
Recalling that $W_{\alpha} = D_{\alpha} - \bar s_{\alpha}$ and
$\remote_{\alpha}\in\{0,1\}$ for each $\alpha \in \{1,\ldots,m\}$, we
see that Eq.~(\ref{eq:constraint1}) gives us $2^m - 1$
constraints on the feasible values of $D_{1}(P),\ldots,D_m(P)$.
Hence, a lower bound on $D^*$ can be found by minimizing $\sum_{\alpha
  = 1}^m W_{\alpha}$ subject to the constraints in
Eq.~\eqref{eq:constraint1}.  By considering the dual of this
problem, one can verify that under the class labeling in
Eq.~\eqref{eq:c_cond}, the problem is equivalent to:
\begin{align*}
&\text{\textbf{minimize}} \quad \;\sum_{\alpha=1}^mc_{\alpha} W_{\alpha}, \\
&\text{\textbf{subject to}} \\
&\begin{bmatrix}
  \lambda_1 & 0 & 0 &\cdots & 0 \\
  \lambda_1 & \lambda_2 & 0 &\cdots & 0  \\
  \vdots & \vdots & \ddots&  & 0 \\
  \lambda_1 & \lambda_2 & \lambda_3 & \cdots & \lambda_m \\
\end{bmatrix}
\begin{bmatrix}
W_{1} \\
W_2 \\
\vdots \\
W_{m}
\end{bmatrix} \geq 
\Psi\begin{bmatrix}
\lambda_{1}^2 \\
(\lambda_{1} +\lambda_2)^2 \\
\vdots \\
(\lambda_{1}+\cdots +\lambda_m)^2
\end{bmatrix},
\end{align*}
where
\[
\Psi := \frac{\btsp^2}{2} \frac{|\env|}{n^2v^2(1-\varrho)^2}.
\]
Under the class labeling in Eq.~\eqref{eq:c_cond} the above
linear program is feasible and bounded, and its solution
$(W_1^*,\ldots,W_m^*)$ is given by
\begin{equation*} 
W_{\alpha}^* = 
  \Psi\left(\lambda_{\alpha}+2\sum_{j=1}^{\alpha-1}\lambda_j\right).
\end{equation*} 
After rearranging, the optimal value of the cost function, and thus
the lower bound on $D^*$, is given by
\begin{align*}
\sum_{\alpha=1}^mc_{\alpha} W_{\alpha}^* 
=\Psi\sum_{\alpha=1}^m\left(c_{\alpha}+2\sum_{j=\alpha+1}^{m}c_j\right)
\lambda_{\alpha}.
\end{align*}
Applying the definition of $\Psi$ we obtain the desired result.
\end{proof}

\begin{remark}[Lower bound for all $\varrho\in{[0,1)}$]
  With slight modifications, it it possible to obtain a less tight
  lower bound valid for all values of $\varrho$. In the above
  derivation, the assumption that $\varrho \to 1^{-}$ is used in
  Eq.~\eqref{eq:d_lower_bd}. It is possible to use, instead, a lower
  bound valid for all $\varrho\in{[0,1)}$ (see \cite{DJS-GJvR:93a}):
\begin{equation*}
\bar d_{\alpha} \geq \gamma
\sqrt{\frac{|\env|}{\sum_{\alpha}r_{\alpha}N_{\alpha}  + n/2}},
\end{equation*}
where $\gamma = 2/(3\sqrt{2\pi}) \approx 0.266$.  Using this bound we
obtain the same linear program as in the proof of
Theorem~\ref{thm:gen_lower_bd}, with the difference that $\Psi$ is now a
function given by
\[
\Psi(x) := \frac{ \gamma^2|\env|}{n^2v^2(1-\varrho)^2}x  - \frac{n}{2}.
\]
Following the procedure in the proof of Theorem~\ref{thm:gen_lower_bd}
\begin{align*}
  W_1^* &= \frac{\gamma^2|\env|}{n^2v^2(1-\varrho)^2} \lambda_1 -
  \frac{n}{2\lambda_1} \\
  W_\alpha^* & = \frac{\gamma^2|\env|}{n^2v^2(1-\varrho)^2}
  \left(\lambda_{\alpha}+2\sum_{j=1}^{\alpha-1}\lambda_j\right),
\end{align*}
for each $\alpha \in\{2,\ldots,m\}$.
Finally, for every policy $P$, $D_{\alpha}(P) \geq W_{\alpha}^* +\bar
s_{\alpha}$, and thus
\begin{equation}
\label{eq:univ_lower_bd}
  D(P) \geq
  \frac{\gamma^2|\env|}{n^2v^2(1-\varrho)^2}\sum_{\alpha=1}^m\left(\left(c_{\alpha}+2\sum_{j=\alpha+1}^{m}c_j\right)\lambda_{\alpha}\right) 
  - \frac{nc}{2\lambda_1} +\sum_{\alpha=1}^mc_{\alpha}\bar s_{\alpha},
\end{equation}
for all $\varrho \in {[0,1)}$ under the labeling in
Eq.~(\ref{eq:c_cond}).  \oprocend
\end{remark}

\section{Separate Queues Policy}
\label{sec:SQ_policy}

In this section we introduce and analyze the Separate Queues (SQ)
policy.  We show that this policy is within a factor of $2m^2$ of the
lower bound in heavy load.

To present the SQ policy we need some notation. We assume vehicle
$k\in\{1,\ldots,n\}$ has a service region $R^{[k]} \subset \env$, such
that $\{R^{[1]},\ldots,R^{[n]}\}$ form a partition of the environment
$\env$.  In general the partition could be time varying, but for the
description of the SQ policy this will not be required.  We assume
that information on outstanding demands of type
$\alpha\in\{1,\ldots,m\}$ in region $R^{[k]}$ at time $t$ is
summarized as a finite set of demand positions $Q_{\alpha}^{[k]}(t)$
with $N_{\alpha}^{[k]}(t):=\card(Q_{\alpha}^{[k]}(t))$ .  Demands of
type $\alpha$ with location in $R^{[k]}$ are inserted in the set
$Q_{\alpha}^{[k]}$ as soon as they are generated. Removal from the set
$Q_{\alpha}^{[k]}$ requires that service vehicle $k$ moves to the
demand location, and provides the on-site service.  With this notation
the policy is given as Algorithm~1.

\begin{algorithm} 
  \dontprintsemicolon %
  \KwAssumes{A probability distribution
    $\mathbf{p}=[p_1,\ldots,p_m]$.}  %
  Partition $\env$ into $n$ equal area regions and assign one
  vehicle to each region. \;%
  \ForEach{\textup{vehicle-region pair $k$}}%
  {%
    \eIf{\textup{the set $\cup_{\alpha}Q_{\alpha}^{[k]}$ is empty}}%
    {%
      Move vehicle toward the median of its own region until a demand
      arrives. \;%
    }%
    { %
      Select $Q\in \{Q_{1}^{[k]},\ldots,Q_{m}^{[k]}\}$ according to
      $\mathbf{p}$. \; %
      \If{\textup{$Q$ is empty}} %
      { %
        Reselect until $Q$ is nonempty. \; %
      }%
      Compute TSP tour through all demands in $Q$. \; %
      Service $Q$ following the TSP tour, starting at the demand
      closest to the vehicle's current position. \; }%
    Repeat.\;%
  }Optimize over $\mathbf{p}$.
  \caption{\bf Separate Queues (SQ) Policy}
\end{algorithm}

\subsection{Stability Analysis of the SQ Policy in Heavy Load}

In this section we will analyze the SQ policy in heavy load, i.e., as
$\varrho \to 1^-$.  In the SQ policy each region $R^{[k]}$ has equal
area, and contains a single vehicle.  Thus, the $n$ vehicle problem in
a region of area $|\env|$ has been turned into $n$ independent single-vehicle problems, each in a region of area $|\env|/n$, with arrival
rates $\lambda_{\alpha}/n$.  To determine the performance of the
policy we need only study the performance in a single region $k$.  For
simplicity of notation we omit the label $k$.  We refer to the time
instant $t_i$ in which the vehicle computes a new $\TSP$ tour as the
epoch $i$ of the policy; we refer to the time interval between epoch
$i$ and epoch $i+1$ as the $i$th iteration and we will refer to its
length as $T_i$. Finally, let $N_{\alpha}(t_i):=N_{\alpha,i}$, $\alpha
\in \{1,\ldots,m\}$, be the number of outstanding $\alpha$-demands at
beginning of iteration $i$.

The following straightforward lemma, proved in
\cite{SLS-MP-FB-EF:08g}, will be essential in deriving our main
results.
\begin{lemma}[Number of outstanding demands]
  \label{lemma:nLarge}
  In heavy load (i.e., $\varrho \to 1^{-}$), after a transient, the
  number of demands serviced in a single tour of the vehicle in the SQ
  policy is very large with high probability (i.e., the number of
  demands tends to $+\infty$ with probability that tends to $1$, as
  $\varrho$ approaches $1^-$).
\end{lemma}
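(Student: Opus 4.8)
The plan is to show that in heavy load the single-region queueing system is overwhelmed, so that the number of demands accumulated between successive TSP-tour computations grows without bound. The key structural fact is the one established in the remark after the lower bound: the time to service a batch of $N$ demands using a TSP tour scales like $N \bar s + \Theta(\sqrt{N|\env|/n}/v)$, so the on-site component dominates the travel component when $N$ is large, which is precisely the regime we are trying to establish.

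First I would make the reduction explicit: since the SQ policy partitions $\env$ into $n$ equal-area regions with one vehicle each and demands are i.i.d.\ uniform, each region behaves as an independent single-vehicle system on area $|\env|/n$ with class arrival rates $\lambda_\alpha/n$, and the load factor of each subsystem is exactly $\varrho$ as defined in Eq.~\eqref{eq:rho_def}. So it suffices to argue in one region. Let $N_i := \sum_\alpha N_{\alpha,i}$ be the total number of outstanding demands at the start of iteration $i$. During iteration $i$ the vehicle selects one queue and services all $N_{\alpha,i}$ demands in it (for the chosen $\alpha$) along a TSP tour; by Eq.~\eqref{eq:tspd} the tour length is at most (asymptotically) $\btsp\sqrt{N_{\alpha,i}|\env|/n}$ plus a lower-order term, so the iteration length satisfies $T_i \le c_1\sqrt{N_i} + c_2 N_i \bar s_{\max}$ for constants depending on $|\env|,n,v$, while during that time a Poisson number of new demands with mean $\tfrac{1}{n}\Lambda T_i$ arrives.

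The core of the argument is a drift / renewal-reward estimate. Consider the total work in the system (sum over outstanding demands of their expected service-plus-travel requirement). Over one iteration, the expected new work injected is $\varrho\, T_i$ worth of "processing time" in the appropriate units, whereas the vehicle processes at unit rate — so in the fluid limit the net drift of the workload is $(\varrho-1)T_i$, which is negative but vanishes as $\varrho\to1^-$. The standard heavy-traffic phenomenon is then that the stationary workload, and hence $N_i$, is $\Theta(1/(1-\varrho))$ in expectation, and in particular $N_i \to +\infty$ in probability as $\varrho\to1^-$. To make the "with high probability" claim precise I would combine (i) a lower bound on the expected number of arrivals during an iteration that empties a batch of size $N$ — which is at least $\tfrac{1}{n}\Lambda \cdot (\text{on-site time}) \approx \varrho N \cdot (\text{fraction from the serviced class})$, forcing the batch sizes not to collapse — with (ii) concentration of the Poisson arrival counts (e.g.\ Chernoff bounds) to conclude that once a batch is large, the next batch is large with probability approaching one, and (iii) the transient remark, to dispense with the initial behaviour.

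The main obstacle I anticipate is handling the random queue selection: in each iteration only one of the $m$ queues is drained, so a single queue could in principle stay small while the others blow up, and one must show the workload argument still forces \emph{the serviced queue} to be large with high probability. I would address this by noting that the probability vector $\mathbf p$ has all-positive entries (or, in the optimized policy, that the relevant entries are bounded away from $0$ uniformly in $\varrho$), so over a bounded number of iterations every queue gets drained; then a workload/Little's-law argument applied over a block of iterations shows the \emph{total} backlog $\sum_i N_i$ over the block is $\Theta(1/(1-\varrho))$, and hence at least one — and by the cycling through all queues, eventually each — queue contributes an unboundedly large batch. Since this lemma is stated as "straightforward" and attributed to \cite{SLS-MP-FB-EF:08g}, I expect the intended proof is exactly this heavy-traffic workload estimate, carried out with only the level of rigor needed to support the subsequent delay analysis.
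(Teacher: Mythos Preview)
The paper does not actually prove this lemma: it merely states it and attributes the proof to \cite{SLS-MP-FB-EF:08g}, calling it ``straightforward.'' So there is no proof in the present paper against which to compare your attempt, and you yourself note this at the end of your proposal.

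That said, your outline is a reasonable reconstruction of the kind of heavy-traffic argument one would expect. Two small comments. First, your fluid drift estimate gives a stationary backlog of order $\Theta(1/(1-\varrho))$, but in this spatial setting the travel component of the service requirement makes the true scaling $\Theta(1/(1-\varrho)^2)$ (as one sees in Theorem~\ref{thm:Queue_length_SQP}); this does not matter for the lemma, which only needs divergence, but your workload bookkeeping would have to include travel time to be internally consistent. Second, your handling of the queue-selection issue is the right idea but is the least developed part: the cleanest way to close it is to argue that once \emph{some} queue is large, the iteration servicing it lasts at least of order $N\bar s_{\min}$, during which every other queue receives Poisson$(\lambda_\alpha T_i/n)$ arrivals and hence becomes large as well with high probability; a single such ``seeding'' iteration, together with the positivity of all $p_\alpha$, then propagates largeness to every subsequently selected batch. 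This is presumably what the cited paper does, but the present paper gives no further detail.
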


Let $TS_j$ be the event that $Q_{j}$ is selected for service at
iteration $i$ of the SQ policy. By the law of total probability
\begin{equation*}
\begin{split} 
\expectation{N_{\alpha,i+1}} = \sum_{j=1}^{m} p_j 
\condexpectation{N_{\alpha,i+1}}{TS_j}, \quad \alpha \in \{1,\ldots,m\},
\end{split}
\end{equation*}
where the conditioning is with respect to the task being performed during
iteration $i$. During iteration $i$ of the policy, demands arrive
according to independent Poisson processes. Call
$N_{\alpha,i}^{\text{new}}$ the $\alpha$-demands ($\alpha \in
\{1,\ldots,m\}$) newly arrived during iteration $i$; then, by
definition of the SQ policy
\begin{equation*}
\condexpectation{N_{\alpha,i+1}}{TS_j} = 
\begin{cases}
\condexpectation{N_{\alpha,i}^{\text{new}}}{TS_j}, &  \textrm{if $\alpha = j$}\\
\condexpectation{N_{\alpha,i}}{TS_j}+\condexpectation{N_{\alpha,i}^{\text{new}}}{TS_j}, &  \textrm{o.w.}
\end{cases}
\end{equation*} 

By the law of iterated expectation, we have
$\condexpectation{N_{\alpha,i}^{\text{new}}}{TS_j} = (\lambda_{\alpha}/n)
\condexpectation{T_i}{TS_j}$. Moreover, since the number of demands
outstanding at the beginning of iteration $i$ is independent of the
task that will be chosen, we have
$\condexpectation{N_{\alpha,i}}{TS_j} =
\expectation{N_{\alpha,i}}$. Thus we obtain
\begin{equation*}
\condexpectation{N_{\alpha,i+1}}{TS_j} = \begin{cases}
\frac{\lambda_{\alpha}}{n}  \condexpectation{T_i}{TS_j}, &  \textrm{if $\alpha = j$}\\
\expectation{N_{\alpha,i}}+\frac{\lambda_{\alpha}}{n}
\condexpectation{T_i}{TS_j}, &  \textrm{o.w.}
\end{cases}
\end{equation*} 

Therefore, we are left with computing the conditional expected values
of $T_i$.  The length of $T_i$ is given by the time needed by the
vehicle to travel along the TSP tour plus the time spent to service
demands. Assuming $i$ large enough, Lemma \eqref{lemma:nLarge} holds,
and we can apply Eq.~\eqref{eq:tspd} to estimate from the quantities
$N_{\alpha,i}$, $\alpha \in \{1,\ldots,m \}$, the length of the TSP
tour at iteration $i$. Conditioning on $TS_j$ (when only demands of
type $j$ are serviced), we have
\begin{equation*}
\begin{split} \condexpectation{T_i}{TS_j} &=\frac{ \btsp
\sqrt{|\env|/n}}{v} \, \condexpectation{\sqrt{N_{j,i}}} {TS_j}+
\condexpectation{\sum_{k=1}^{N_{j,i}} s_{j,k}}{TS_j} \\&\leq \frac{ \btsp
\sqrt{|\env|/n}}{v}\, \sqrt{\expectation{N_{j,i}}}
+ \expectation{ N_{j,i} }\bar{s}_{j},
\end{split}
\end{equation*} 
where we have: (i) applied Eq.~\eqref{eq:tspd}, (ii) applied Jensen's inequality for concave functions, in the form  $\expectation{\sqrt{X}} \leq \sqrt{\expectation{X}}$, (iii) removed the conditioning on $TS_j$, since the random variables $N_{\alpha,i}$ are independent from \emph{future} events, and in particular from the choice of the task at iteration $i$, and (iv)  used the \emph{crucial} fact that the on-site service times are independent from the number of outstanding demands.

Collecting the above results (and using the shorthand $\bar X$ to indicate $\expectation{X}$, where $X$ is any random variable), we have
\begin{equation}
\label{eq:N_SQP_dyn}
\begin{split} \bar N_{\alpha,i+1} \leq &(1- p_{\alpha}) \bar N_{\alpha,i}+ \sum_{j=1}^{m} p_j\frac{\lambda_{\alpha}}{n} \Biggl [\frac{ \btsp
\sqrt{|\env|/n}}{v} \,\sqrt{\bar N_{j,i}}
+ \bar N_{j,i} \bar{s}_{j}   \Biggr],
\end{split}
\end{equation}
for each $\alpha\in\{1,\ldots,m\}$. The $m$ inequalities above
describe a system of recursive relations that allows to find an upper bound on
$\bar{N}_{\alpha,i}$, $\alpha \in \{1,\ldots,m\}$. The following
theorem (see Appendix for its proof) bounds the values to which they converge.

\begin{theorem}[Queue length]
\label{thm:Queue_length_SQP}
In heavy load, for every set of initial conditions $\{\bar N_{\alpha,0}\}_{\alpha \in
  \{1,\ldots,m\}}$, the trajectories $i \mapsto \bar N_{\alpha,i}$,
$\alpha \in \{1,\ldots,m\}$, resulting from
Eqs.~\eqref{eq:N_SQP_dyn}, satisfy
  \[
  \limsup_{i\to+\infty}\bar N_{\alpha,i} \leq \frac{\btsp^2
    |\env|}{n^3v^2 (1-\varrho)^2}\frac{\lambda_{\alpha}}{p_{\alpha}}
  \left(\sum_{j=1}^m\sqrt{\lambda_j p_j}\right)^2.
\]
\end{theorem}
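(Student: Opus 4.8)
The plan is to analyze the coupled recursion in Eq.~\eqref{eq:N_SQP_dyn} by first understanding its fixed point and then showing the trajectories converge to it (or below it). Since the dominant behavior as $\varrho \to 1^-$ is governed by large queue lengths (Lemma~\ref{lemma:nLarge}), the linear terms $\bar N_{j,i}\bar s_j$ dominate the $\sqrt{\bar N_{j,i}}$ terms, but keeping both is necessary to get the stated bound with the $(1-\varrho)^2$ denominator. First I would substitute a candidate equilibrium $\bar N_\alpha^\star = \frac{\lambda_\alpha}{p_\alpha} K$ for a common constant $K$ to be determined; the $\alpha$-dependence $\lambda_\alpha/p_\alpha$ is forced by the structure of Eq.~\eqref{eq:N_SQP_dyn} because the only $\alpha$-dependence on the right-hand side enters through the prefactor $(1-p_\alpha)$ acting on $\bar N_{\alpha,i}$ and the overall multiplier $\lambda_\alpha/n$. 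Plugging this ansatz into the fixed-point equation, the terms $\sum_j p_j \bar N_j^\star \bar s_j = K \sum_j \lambda_j \bar s_j = K n\varrho$ collapse nicely using the definition of $\varrho$ in Eq.~\eqref{eq:rho_def}, and the square-root terms give $\sum_j p_j \frac{\btsp\sqrt{|\env|/n}}{v}\sqrt{\bar N_j^\star} = \frac{\btsp\sqrt{|\env|/n}}{v}\sqrt{K}\sum_j \sqrt{\lambda_j p_j}$. Solving the resulting scalar equation for $K$ (a quadratic in $\sqrt{K}$) and isolating the leading term as $\varrho \to 1^-$ should yield $K = \frac{\btsp^2|\env|}{n^2 v^2(1-\varrho)^2}\left(\sum_j\sqrt{\lambda_j p_j}\right)^2$, which gives exactly the claimed bound after multiplying by $\lambda_\alpha/p_\alpha$.

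Next I would establish that $\limsup_{i\to\infty}\bar N_{\alpha,i}$ is bounded by this equilibrium regardless of initial conditions. The idea is to use a monotonicity/comparison argument: treat Eq.~\eqref{eq:N_SQP_dyn} as $\bar N_{i+1} \le F(\bar N_i)$ where $F:\real_{\ge 0}^m \to \real_{\ge 0}^m$ is monotone (each component of $F$ is nondecreasing in each argument, since all coefficients are nonnegative and $x\mapsto\sqrt{x}$ is increasing). For a monotone map, if one can exhibit a ``supersolution'' — a vector $v$ with $F(v)\le v$ — then the iterates starting from any point below $v$ stay below $v$, and more importantly the iterates from any point eventually drop below $v$ if $v$ can be taken as large as desired; combined with $F(v^\star)=v^\star$ at the true fixed point and a contraction-type estimate near it, one concludes $\limsup \bar N_{\alpha,i}\le v^\star_\alpha$. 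Concretely I would show that for any $\epsilon>0$, the scaled vector $(1+\epsilon)\bar N^\star$ is eventually a supersolution in the heavy-load limit (the square-root terms are lower order and get absorbed), so the trajectory enters and remains in the box $[0,(1+\epsilon)\bar N^\star_\alpha]$; letting $\epsilon\to 0$ finishes it. This is essentially a standard argument for cooperative/monotone discrete dynamical systems, and the paper defers the details to the Appendix.

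The main obstacle, I expect, is handling the interaction between the sublinear $\sqrt{\cdot}$ terms and the linear terms cleanly enough to get the $\limsup$ statement with the stated constant and not just an order-of-magnitude bound. One has to be careful that the square-root contributions, while asymptotically negligible relative to the $(1-\varrho)^{-2}$ scaling of the equilibrium, do not spoil the monotone-convergence argument for fixed $\varrho<1$; the cleanest route is probably to first prove a crude a~priori bound $\limsup \bar N_{\alpha,i} \le C/(1-\varrho)^2$ for some constant $C$ (using that the linear part alone, with spectral radius related to $\varrho$, is stable), then bootstrap: insert this crude bound into the $\sqrt{\cdot}$ terms, treat them as a known forcing term of order $1/(1-\varrho)$, and re-solve the now-affine recursion to sharpen the constant to the claimed value. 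A secondary technical point is verifying that the linear recursion $\bar N_{i+1} = (I - \operatorname{diag}(p)) \bar N_i + \text{(rank-one-ish coupling)}\,\bar N_i$ is indeed stable with the right rate; this reduces to checking that the relevant matrix has spectral radius $\varrho < 1$, which should follow from the definition of $\varrho$ and the constraint $\sum_\alpha p_\alpha = 1$.
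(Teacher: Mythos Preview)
Your plan is correct and shares the paper's overall strategy: compute the fixed point via the ansatz $\bar N_\alpha^\star=(\lambda_\alpha/p_\alpha)K$ (the paper does this identically), exploit the monotonicity of the recursion map, and rely on stability of the linear part. The execution of the boundedness step differs. The paper sandwiches the recursion between two auxiliary dynamical systems: System-Y (Eq.~\eqref{eq:N_SQP_dyn} with equality) and a purely affine System-Z obtained by applying Young's inequality $\sqrt{a}\le\frac{1}{4\eps}+\eps a$ to linearize the square-root terms, yielding $z(i+1)=(A+\eps B)z(i)+\text{const}$. A separate eigenvalue lemma shows that the matrix $A$ governing the linear part has only real eigenvalues of modulus $<1$ (via the characteristic relation $\sum_j p_j\bar s_j\hat\lambda_j/(\mu-q_j)=1$ together with $\varrho<1$), so $A+\eps B$ is stable for small $\eps$ and System-Z converges; this gives boundedness of Systems~X and~Y in one stroke, after which the limsup of System-Y is computed exactly as in your fixed-point calculation. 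Your supersolution/bootstrap route would also work---indeed $F(Cv^\star)\le Cv^\star$ for every $C\ge1$ since $\sqrt{\cdot}$ scales sublinearly, so invariant boxes of arbitrary size exist---but making the ``enters and stays in the $(1+\epsilon)$-box'' step rigorous requires essentially the same spectral input, and the Young's-inequality linearization packages this more cleanly than a two-stage bootstrap. One small correction: the spectral radius of the linear part is not literally $\varrho$; the paper's lemma shows only that it is strictly less than~$1$, with $\varrho<1$ entering at the final inequality.
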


\subsection{Delay of the SQ Policy in Heavy Load}

From Theorem~\ref{thm:Queue_length_SQP}, and using Little's law, the
delay of $\alpha$-demands is 
\begin{align*}
D_{\alpha}(SQ) &\leq \frac{n}{\lambda_{\alpha}}
  \limsup_{i\to+\infty}\bar N_{\alpha,i} + \bar s_{\alpha} \\&=  \frac{\btsp^2
    |\env|}{n^2v^2 (1-\varrho)^2}\frac{1}{p_{\alpha}}
  \left(\sum_{j=1}^m\sqrt{\lambda_j p_j}\right)^2,
\end{align*}
where we neglected $\bar s_{\alpha}$ because of the heavy-load assumption.

Thus, the delay (as defined in Eq.~\eqref{eq:weighted_delay}) of the
SQ policy, satisfies in heavy load 
\begin{equation}
  \label{eq:SQ_delay}
  D(SQ) \leq\frac{\btsp^2 |\env|}{n^2v^2 (1-\varrho)^2}
  \sum_{\alpha=1}^m \frac{c_{\alpha}}{p_{\alpha}} \left(
    \sum_{i=1}^m\sqrt{\lambda_i p_i}\right)^2.
\end{equation}
With this expression we prove our main result on the performance of
the SQ policy.
\begin{theorem}[SQ policy performance]
\label{thm:SQ_perform}
In heavy load, the delay of the SQ policy is within a factor $2m^2$ of the
optimal, independent of the arrival rates
$\lambda_1,\ldots,\lambda_m$, coefficients $c_1,\ldots,c_m$, service
times $\bar s_1,\ldots,\bar s_m$, and the number of vehicles $n$.
\end{theorem}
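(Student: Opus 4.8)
The plan is to directly compare the upper bound on $D(SQ)$ in Eq.~\eqref{eq:SQ_delay} with the heavy-load lower bound $D^*$ from Theorem~\ref{thm:gen_lower_bd}. The free parameter in the SQ bound is the distribution $\mathbf{p}$, so the first step is to choose $\mathbf{p}$ (not necessarily the true optimizer, just a good enough one) that makes the ratio controllable. A natural and clean choice is $p_\alpha = \sqrt{c_\alpha \lambda_\alpha}/\sum_{j=1}^m \sqrt{c_j\lambda_j}$ (or the even simpler $p_\alpha \propto \sqrt{\lambda_\alpha}$), since this is what one gets from applying Cauchy--Schwarz to the sum $\sum_\alpha (c_\alpha/p_\alpha)(\sum_i\sqrt{\lambda_i p_i})^2$. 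With such a choice I would reduce Eq.~\eqref{eq:SQ_delay} to an expression of the form $\frac{\btsp^2|\env|}{n^2v^2(1-\varrho)^2} \cdot f(c,\lambda)$ for an explicit $f$; for the $p_\alpha\propto\sqrt{c_\alpha\lambda_\alpha}$ choice one gets $f(c,\lambda) = \left(\sum_{\alpha=1}^m \sqrt{c_\alpha\lambda_\alpha}\right)^2$.

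The second step is to lower-bound $D^*$ using Theorem~\ref{thm:gen_lower_bd}, which gives $D^* \geq \frac{\btsp^2|\env|}{2n^2v^2(1-\varrho)^2}\sum_{\alpha=1}^m\left(c_\alpha + 2\sum_{j=\alpha+1}^m c_j\right)\lambda_\alpha$. The $\btsp^2|\env|/(n^2v^2(1-\varrho)^2)$ factors cancel in the ratio $D(SQ)/D^*$, leaving a purely algebraic inequality: I must show
\[
\frac{\left(\sum_{\alpha=1}^m \sqrt{c_\alpha\lambda_\alpha}\right)^2}{\tfrac12\sum_{\alpha=1}^m\left(c_\alpha + 2\sum_{j=\alpha+1}^m c_j\right)\lambda_\alpha} \leq 2m^2,
\]
i.e. $\left(\sum_{\alpha}\sqrt{c_\alpha\lambda_\alpha}\right)^2 \leq m^2 \sum_{\alpha}\left(c_\alpha + 2\sum_{j>\alpha}c_j\right)\lambda_\alpha$. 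For the numerator, Cauchy--Schwarz gives $\left(\sum_\alpha\sqrt{c_\alpha\lambda_\alpha}\right)^2 \leq m\sum_\alpha c_\alpha\lambda_\alpha$. For the denominator, since every term $c_\alpha + 2\sum_{j>\alpha}c_j \geq c_\alpha$, we have $\sum_\alpha(c_\alpha+2\sum_{j>\alpha}c_j)\lambda_\alpha \geq \sum_\alpha c_\alpha\lambda_\alpha$, so the ratio is at most $m \leq m^2$, in fact beating $2m^2$ with room to spare — which suggests either that the true optimal $\mathbf{p}$ is used and the labeling constraint Eq.~\eqref{eq:c_cond} is what forces the weaker constant, or that the intended comparison is against the simpler bound and I should track the constants more carefully. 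I would therefore redo this step with $\mathbf{p}$ equal to the actual minimizer of Eq.~\eqref{eq:SQ_delay} and invoke Eq.~\eqref{eq:c_cond} where needed.

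The step I expect to be the main obstacle is the algebraic comparison in the presence of the constraint~\eqref{eq:c_cond}: the lower bound's cross-terms $2\sum_{j>\alpha}c_j$ and the SQ bound's mixing of $c_\alpha$ with $\lambda_\alpha$ through the optimal $\mathbf{p}$ do not line up term-by-term, so the worst case over all $(c,\lambda)$ satisfying~\eqref{eq:c_cond} has to be identified. I anticipate the clean route is: bound $D(SQ)$ above by evaluating Eq.~\eqref{eq:SQ_delay} at a convenient suboptimal $\mathbf{p}$, bound each $\lambda_\alpha$-weighted term of the lower bound below by dropping the positive cross-terms, and absorb the remaining discrepancy — the $(\sum\sqrt{\cdot})^2$ versus $m\sum(\cdot)$ gap and a factor of $2$ from the $\tfrac12$ and from the on-site service times neglected in heavy load — into the stated factor $2m^2$. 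Finally I would note that the bound is manifestly independent of $\lambda_1,\dots,\lambda_m$, $c_1,\dots,c_m$, $\bar s_1,\dots,\bar s_m$, and $n$, since all of these cancelled in the ratio, which completes the proof.
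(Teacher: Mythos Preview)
Your overall strategy is exactly the paper's: evaluate Eq.~\eqref{eq:SQ_delay} at a convenient suboptimal $\mathbf{p}$, lower-bound the denominator of Theorem~\ref{thm:gen_lower_bd} by dropping the cross-terms $2\sum_{j>\alpha}c_j$, and close the gap with Cauchy--Schwarz (equivalently $\|x\|_1\leq\sqrt{m}\,\|x\|_2$). The labeling constraint~\eqref{eq:c_cond} plays no role beyond ensuring the lower bound is valid; your suspicion that it is needed to get the constant is unfounded.

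The one concrete slip is the evaluation of Eq.~\eqref{eq:SQ_delay} at $p_\alpha\propto\sqrt{c_\alpha\lambda_\alpha}$. That choice does \emph{not} yield $f(c,\lambda)=\bigl(\sum_\alpha\sqrt{c_\alpha\lambda_\alpha}\bigr)^2$; a direct substitution shows the resulting expression is $\bigl(\sum_\alpha\sqrt{c_\alpha/\lambda_\alpha}\bigr)\bigl(\sum_\alpha \lambda_\alpha^{3/4}c_\alpha^{1/4}\bigr)^2$, which is not what you want (and indeed for $m=2$, $c_1=c_2=1/2$, $\lambda_1=\lambda_2=1$ the minimum of the right-hand side of Eq.~\eqref{eq:SQ_delay} over $\mathbf{p}$ is $4$, not $2$). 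The paper instead takes $p_\alpha=c_\alpha$, for which $\sum_\alpha c_\alpha/p_\alpha=m$ and $\sum_i\sqrt{\lambda_ip_i}=\sum_i\sqrt{\lambda_ic_i}$, giving $f(c,\lambda)=m\bigl(\sum_\alpha\sqrt{c_\alpha\lambda_\alpha}\bigr)^2$. With this extra factor of $m$, your Cauchy--Schwarz step $\bigl(\sum_\alpha\sqrt{c_\alpha\lambda_\alpha}\bigr)^2\leq m\sum_\alpha c_\alpha\lambda_\alpha$ and your lower-bound simplification $\sum_\alpha(c_\alpha+2\sum_{j>\alpha}c_j)\lambda_\alpha\geq\sum_\alpha c_\alpha\lambda_\alpha$ give precisely $D(SQ)/D^*\leq 2m\cdot m=2m^2$. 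So the ``room to spare'' you noticed is an artifact of the miscomputed $f$, not of a sharper argument.
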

\begin{proof}
We would like to compare the performance of this policy with the lower
bound.  To do this, consider setting
$p_{\alpha} := c_{\alpha}$ for each $\alpha\in\{1,\ldots,m\}$. Defining $B:=\btsp^2 |\env|/(n^2v^2 (1-\varrho)^2)$,
Eq.~\eqref{eq:SQ_delay} can be written as
\begin{align*}
  D(SQ) &\leq B m \left(\sum_{i=1}^m\sqrt{c_i\lambda_i}\right)^2.
\end{align*}
Next, the lower bound in Eq.~\eqref{eq:lower_bd} is
\begin{align*}
  D^* &\geq \frac{B}{2} \sum_{i=1}^m \left(c_{i} +
      2\sum_{j=i+1}^{m} c_j\right) \lambda_{i} \geq \frac{B}{2}
  \sum_{i=1}^m \left(c_{i}\lambda_{i}\right).
\end{align*}
Thus, comparing the upper and lower bounds
\begin{equation}
\label{eq:const_factor}
\frac{D(SQ)}{D^*} \leq 2m \frac{\left(\sum_{i=1}^m\sqrt{c_i \lambda_i}\right)^2}{\sum_{i=1}^m \left(c_{i}\lambda_{i}\right)}.
\end{equation}
Letting $x_i := \sqrt{c_i\lambda_i}$, and $\mathbf{x}
:=[x_1,\ldots,x_m]$, the numerator of the fraction in
Eq.~\eqref{eq:const_factor} is $\|\mathbf{x}\|_1^2$, and the
denominator is $\|\mathbf{x}\|_2^2$.  But the one- and two-norms of a
vector $\mathbf{x}\in\R^m$ satisfy $\|\mathbf{x}\|_1 \leq \sqrt{m}
\|\mathbf{x}\|_2$.  Thus, in heavy load we obtain

\[
\frac{D(SQ)}{D^*} \leq 2m
\left(\frac{\|\mathbf{x}\|_1}{\|\mathbf{x}\|_2}\right)^2\leq 2m^2,
\]
and the policy is a $2m^2$-factor approximation.
\end{proof}

\begin{remark}[Relation to RP policy in~\cite{SLS-MP-FB-EF:08g}]
  For $m=2$ the SQ policy is within a factor of $8$ of the optimal.
  This improves on the factor of $12$ obtained for the Randomized
  Priority (RP) policy in \cite{SLS-MP-FB-EF:08g}. However, it
  appears that the RP policy bound is not tight, since for two
  classes, simulations indicate it performs no worse than the SQ policy.
  \oprocend
\end{remark}

\section{Simulations and Discussion}
\label{sec:simu}


In this section we discuss, through the use of simulations, the
performance of the SQ policy with the probability assignment
$p_{\alpha} := c_{\alpha}$, for each $\alpha\in\{1,\ldots,m\}$. In
particular, we study (i) the tightness of the upper bound in
equation~\eqref{eq:SQ_delay}, (ii) conditions for which the gap between the
lower bound in equation~\eqref{eq:lower_bd} and the upper bound in
equation~\eqref{eq:SQ_delay} is maximized, (iii) the suboptimality of the
probability assignment $p_{\alpha} = c_{\alpha}$, and (iv) the
difference in performance between the SQ policy and a policy that
merges all classes together irrespective of priorities.
Simulations of the SQ policy were performed using {\ttfamily
  linkern}\footnote{The TSP solver {\ttfamily linkern} is freely available
  for academic research use at {\ttfamily
    http://www.tsp.gatech.edu/concorde.html}.} as a solver to generate
approximations to the optimal TSP tour.


\subsection{Tightness of the Upper Bound}
We consider one vehicle, four classes of demands, and several values
of the load factor $\varrho$.
For each value of $\varrho$ we perform 100 runs.  In each run we
uniformly randomly generate arrival rates
$\lambda_1,\ldots,\lambda_m$, convex combination coefficients
$c_1,\ldots,c_m$, and on-site service times $\bar s_1,\ldots,\bar
s_m$, and normalize the values such that the constraints
$\sum_{\alpha=1}^m \lambda_{\alpha} \bar s_\alpha = \varrho$ and
$\sum_{\alpha=1}^m c_{\alpha}=1$ are satisfied.  In each run we
iterate the SQ policy $4000$ times, and compute the steady-state
expected delay by considering the number of demands in the last $1000$
iterations. For each value of $\varrho$ we compute the ratio $\chi$
between the expected delay and the theoretical upper bound in
equation~(\ref{eq:SQ_delay}).  Table~\ref{tab:tightness} reports the ratio,
its standard deviation, and its minimum and maximum values for each
$\varrho$ value.
One can see that the upper bound provides a reasonable approximation
for load factors as low as $\varrho = 0.75$.
\begin{table}[bth]
  \centering \footnotesize
  \begin{tabular}{c|c|c|c|c}
    Load factor ($\varrho$) & $\expectation{\chi}$ &  $\sigma_{\chi}$ & $\max{\chi}$ & $\min{\chi}$\\
    \hline
    \hline
    0.75 &  0.803 & 0.092 & 1.093 & 0.354 \\
    0.8 &  0.778 & 0.108 & 0.943 & 0.256\\
    0.85 & 0.773 & 0.111 & 1.150 & 0.417\\
    0.9 & 0.733 & 0.159 & 1.162 & 0.203\\
    0.95  & 0.716 & 0.131 & 0.890 &  0.257
  \end{tabular}
  \caption{Ratio $\chi$ between experimental results and upper bound for
    various values of $\varrho$.} \label{tab:tightness}
\end{table}

\subsection{Unfavorable Conditions for the SQ Policy}

One may question if for some sets $\{ \lambda_\alpha \}$ and $\{
c_\alpha \}$, $\alpha \in \{1,\ldots,m\}$, the ratio between upper
bound \eqref{eq:SQ_delay} and lower bound \eqref{eq:lower_bd} is
indeed close to $2m^2$. The answer is affirmative: consider, e.g., the
case $\lambda_1 \ll \lambda_2 \ll \ldots \ll \lambda_m$ and $c_1 \gg
c_2 \gg \ldots \gg c_m$, with $\lambda_{\alpha}c_{\alpha} = a$, for
some positive constant $a$. Then, the upper bound is equal to $Bm^3a$
and the lower bound is approximately equal to $Bma/2$, thus their
ratio is (arbitrarily) close to $2m^2$. Then, we simulated the SQ
policy for the case $\lambda_m = a\lambda_{m-1} =
a^2\lambda_{m-1}=\ldots=a^{m-1} \lambda_1$ and $c_1 = a c_2 =
\ldots=a^{m-1} c_m$ with $a=2$. Fig.~\ref{fig:worstCaseBound.pdf}
shows that the experimental value of the cost function (averaged over
$10$ simulation runs) indeed increases proportionally to $m^2$.

\begin{figure}
\centering
\includegraphics[width=0.8\linewidth]{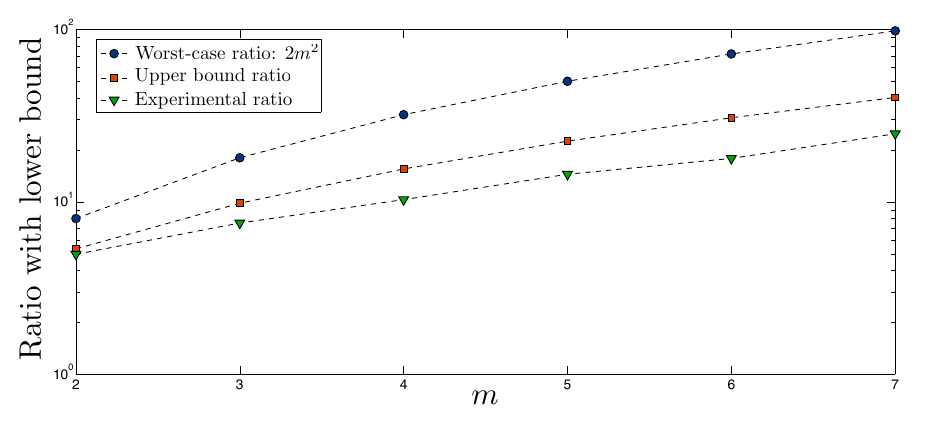}
\caption{Experimental results for the SQ policy in worst-case
  conditions; $\varrho = 0.85$ and $\lambda_1 = 1$.}
\label{fig:worstCaseBound.pdf}
\end{figure}

\subsection{Suboptimality of the Approximate Probability Assignment}
To prove Theorem~\ref{thm:SQ_perform} we used the probability
assignment
\begin{equation}
\label{eq:p_assign}
p_{\alpha} := c_{\alpha} \quad\text{for each $\alpha\in\{1,\ldots,m\}$}.
\end{equation}
However, one would like to select $[p_1,\ldots,p_m]=:\mathbf{p}$ that
minimizes the right-hand side of Eq.~\eqref{eq:SQ_delay}.  The
minimization of the right-hand side of Eq.~\eqref{eq:SQ_delay} is
a constrained multi-variable nonlinear optimization problem over
$\mathbf{p}$, that is, in $m$ dimensions.  However, for two classes of
demands the optimization is over a single variable $p_1$, and it can
be readily solved.  A comparison of optimized upper bound, denoted
$\upbd_{\opt}$, with the upper bound obtained using the probability
assignment in Eq.~\eqref{eq:p_assign}, denoted $\upbd_{c}$, is
shown in Fig.~\ref{fig:opt_p}.
\begin{figure}
\centering
\includegraphics[width=0.8\linewidth]{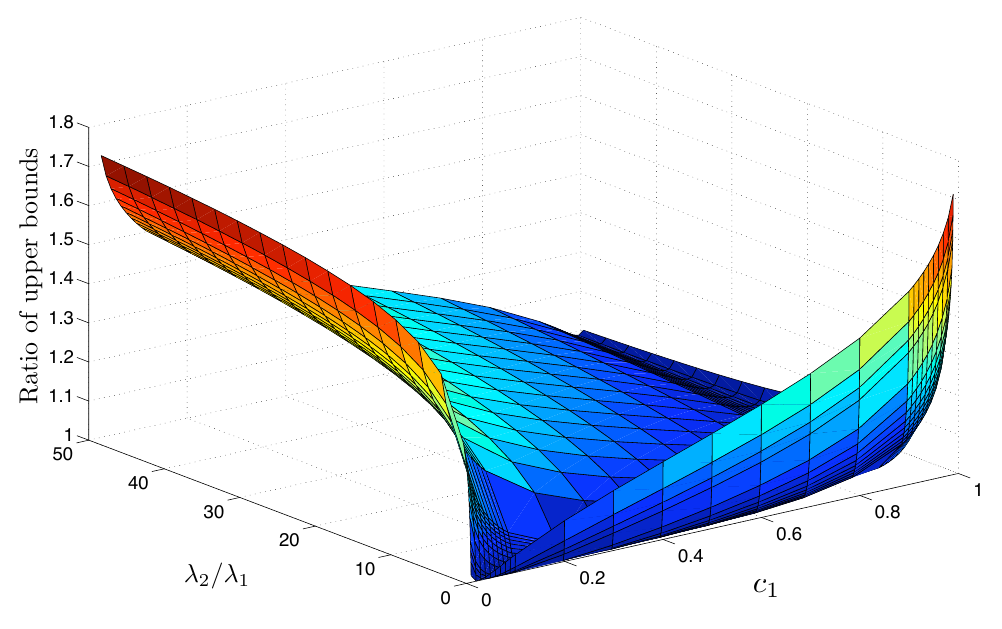}
\caption{The ratios $\upbd_c/\upbd_{opt}$ for 2 classes of demands.}
\label{fig:opt_p}
\end{figure}

For $m>2$ we approximate the solution of the optimization problem as
follows.  For each value of $m$ we perform 1000 runs.  In each run we
randomly generate $\lambda_1,\ldots,\lambda_m$, $c_1,\ldots,c_m$, and
five sets of initial probability assignments
$\mathbf{p}_1,\ldots,\mathbf{p}_5$.  From each initial probability
assignment we use a line search to locally optimize the probability
assignment.  We take the ratio between $\upbd_c$ and the least upper
bound $\upbd_{\text{local opt}}$ obtained from the five locally
optimized probability assignments.  We also record the maximum
variation in the five locally optimized upper bounds.  This is
summarized in Table~\ref{tab:max_dev}.
\begin{table}
  \centering
  \begin{tabular}{c|c|c}
    Number of classes ($m$) & $\max \upbd_{c}/\upbd_{\opt}$ &  
    Max. $\%$ variation \\
    \hline
    \hline
    3 & 1.60 & 0.12 \\
    4 & 1.51 & 0.04 \\
    5 & 1.51 & 0.08 \\
    6 & 1.74 & 0.02 \\
    7 & 1.88 & 0.08 \\
    8 & 1.63 & 0.15
  \end{tabular}
  \caption{Ratio of upper bound with $p_{\alpha} = c_{\alpha}$ 
    and upper bound with optimized $\mathbf{p}$.}
  \label{tab:max_dev}
\end{table}
The second column shows the largest ratio obtained over the 1000 runs.
The third column shows the largest $\%$ variation in the 1000 runs.
The assignment in Eq.~\eqref{eq:p_assign} performs within a factor of
two of the optimized assignment.  In addition, the optimization
appears to converge to values close to a global optimum since all five
random conditions converge to values that are within $\sim 0.1\%$ of
each other on every run.

\subsection{The Merge Policy}
The simplest possible policy for our problem would be to ignore
priorities and service demands all together, by repeatedly forming TSP
tours of outstanding demands (i.e., by using the SQ policy as though
there were only one class). We call such a policy the Merge policy.
However, the performance of the SQ and the Merge policy can be
arbitrarily far apart.  Indeed, by defining the overall arrival rate
$\Lambda := \sum_{\alpha=1}^m \lambda_\alpha$ and overall mean on-site
service $\bar S:=\sum_{\alpha=1}^m \lambda_\alpha$, and by using the
upper bounds in \cite{DJS-GJvR:91}, we immediately obtain as an upper
bound for the Merge policy: $D(\mathrm{Merge})\leq \frac{\btsp^2
  |\env|\Lambda}{n^2v^2 (1-\varrho)^2}$. Then, we see that
$D(\mathrm{Merge})/D(SQ)$ can be arbitrarily large by choosing
$\lambda_m \gg \lambda_\alpha$ and $c_m \ll c_\alpha$, with $\alpha
\in \{1,\ldots,m-1\}$. This behavior is confirmed by experimental
results, as depicted in Fig. \ref{fig:MPbad} where we show the
experimental ratios of delays between Merge and SQ policy (the ratios
are averaged values over 10 simulation runs).

\begin{figure}
\centering
\includegraphics[width=0.8\linewidth]{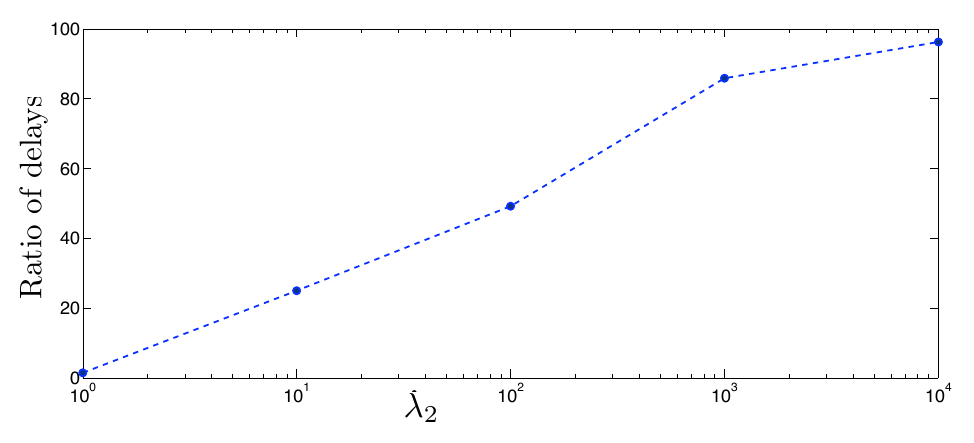}
\caption{Ratio of experimental delays between Merge policy and SQ
  policy as a function of $\lambda_2$, with $m=2$, $\lambda_1=1$, c =
  0.995 and $\varrho = 0.9$.}
\label{fig:MPbad}
\end{figure}

\section{Conclusions}
\label{sec:conc}
In this paper we studied a dynamic multi-vehicle routing problem with
multiple classes of demands.  For every set of coefficients, we
determined a lower bound on the achievable convex combination of the
class delays.  We presented the Separate Queues (SQ) policy and showed
that its deviation from the lower bound depends only on the number of
the classes. We believe that there is room for improvement in the
lower bound, and thus the SQ policy's performance may be significantly
better than is indicated by its deviation from the current lower
bound.  Thus, our main thrust of future work will be in trying to
raise the lower bound.  We are also interested in combining the
aspects of multi-class vehicle routing with problems in which demands
require teams of vehicles for their service, and in extending our results to the case of non-uniform demand densities (possibly class dependent).


\bibliographystyle{IEEEtran}
\bibliography{brevalias,New,Main,FB}

\section*{Appendix}

In this appendix we prove Theorem \ref{thm:Queue_length_SQP}.
Henceforth, we consider the relation ``$\leq$'' in $\reals^m$ as the
product order of $m$ copies of $\reals$ (in other words, given two
vectors $v, \, w \in \reals^m$, the relation $v \leq w$ is interpreted component-wise).

\begin{proof}[Proof of Theorem~\ref{thm:Queue_length_SQP}]
  Define $q_j := 1-p_j$ and let $\hat \lambda_{\alpha}$ denote the
  arrival rate in region $R^{[k]}$.  Thus $\hat \lambda_{\alpha} :=
  \lambda_{\alpha}/n$ for each $\alpha\in\{1,\ldots,m\}$.  Let
  $x(i):=( \bar N_{1,i},\bar N_{2,i},\ldots, \bar N_{m,i}) \in
  \reals^m$ and define two matrices
    \[
  A := \begin{bmatrix}
    \hat \lambda_{1} p_1\bar{s}_1 + q_1  & \hat \lambda_{1} p_2 \bar{s}_2 & \ldots &\hat \lambda_{1} p_m \bar{s}_m\\
    \hat \lambda_{2} p_1 \bar{s}_1 & \hat \lambda_{2} p_2 \bar{s}_2 + q_2 & \ldots &\hat \lambda_{2} p_m \bar{s}_m\\
    \vdots & &\ddots&\vdots\\
    \hat \lambda_{m} p_1 \bar{s}_1 & \hat \lambda_{m} p_2 \bar{s}_2 & \ldots &\hat \lambda_{m} p_m \bar{s}_m+ q_m \\
    \end{bmatrix},
  \]
  and
     \[
  B :=  \frac{ \btsp \sqrt{|\env|}}{\sqrt{n}v} \begin{bmatrix}
    \hat \lambda_{1} p_1  & \hat \lambda_{1} p_2 & \ldots &\hat \lambda_{1} p_m \\
    \hat \lambda_{2} p_1& \hat \lambda_{2} p_2 & \ldots &\hat \lambda_{2} p_m\\
    \vdots & &\ddots&\vdots\\
    \hat \lambda_{m} p_1 & \hat \lambda_{m} p_2& \ldots &\hat \lambda_{m} p_m  \\
    \end{bmatrix},
  \]
  Then Eqs.~\eqref{eq:N_SQP_dyn} can be written as
  \begin{equation}
  \label{eq:n_evol}
     x(i+1) \leq  A x(i) + B 
  \begin{bmatrix}
        \sqrt{ x_1(i)} \\ \sqrt{ x_2(i)} \\ \vdots \\ \sqrt{ x_m(i)}
  \end{bmatrix} =: f(x(i))
  \end{equation}
  where $f: \reals_{\geq0} \mapsto \reals_{\geq0}$, and $ x_j(i)$,
  $j\in\{1,\ldots,m\}$, are the components of vector $x(i)$. We refer
  to the discrete system in Eq.~\eqref{eq:n_evol} as System-X. Next we
  define two auxiliary systems, System-Y and System-Z. We define
  System-Y as
\begin{equation}
  \label{eq:barn_evol}
  y(i+1) = f( y(i)) .
  \end{equation}
  System-Y is, therefore, equal to System-X, with the exception that
  we replaced the inequality with an equality.

  Pick, now, any $\eps>0$. From Young's inequality
\begin{equation}\label{eq:ineq}
  \sqrt{a} \leq 
  \frac{1}{4\eps} + \eps a,
  \quad \text{for all } a\in\real_{\geq0}.
\end{equation}
Hence, for $i\mapsto y(i)\in\real_{\geq0}^m$, the
Eq.~\eqref{eq:barn_evol} becomes
\begin{align*}
  y(i+1) &\leq A y(i) + B \Big( \frac{1}{4\eps}
  \mathbf{1}_m
  + \eps   \,  y(i) \Big)
  \\
  &=\Big(A +\eps B \Big) y(i) + 
  \frac{1}{4\eps} B \mathbf{1}_m.
\end{align*}
where $\mathbf{1}_m$ is the vector $(1,1,\ldots,1)^{\text{T}}\in \reals^m$.
Next, define System-Z as 
\begin{equation}
  \label{eq:syst_z}
  z(i+1) =\Big(A +\eps B \Big) z(i) + 
  \frac{1}{4\eps} B \mathbf{1}_m =: g(z(i)).
\end{equation}

The proof now proceeds as follows. First, we show that if $x(0) =y(0)=
z(0)$, then
 \begin{equation}\label{eq:traj_bound}
  x(i) \leq y(i) \leq z(i), \quad \text{for all $i \geq 0$}
  \end{equation}
  Second, we show that the trajectories of System-Z are bounded;
  this fact, together with Eq.~\eqref{eq:traj_bound}, implies that
  also trajectories of System-Y and System-X are bounded. Third, and
  last, we will compute $\limsup_{i \to +\infty} y(i)$; this
  quantity, together with Eq.~\eqref{eq:traj_bound}, will yield the
  desired result.

  Let us consider the first issue. We have $y(1) = f(y(0))$ and $z(1)
  = g(z(0))$. Since, by assumption $z(0) =y(0)$, we have that $g(z(0))
  = g(y(0)) \geq f(y(0))$, where the last inequality follows from
  Eq.~\eqref{eq:ineq} and by definition of $f$ and $g$ . Therefore, we
  get $y(1) \leq z(1)$. Then, we have $y(2) = f(y(1))$ and $z(2) =
  g(z(1))$. Since $z(1), y(1) \in \reals^m_{\geq 0}$, and the elements
  in matrices $A$ and $B$ are all non-negative, then $y(1) \leq z(1)$
  implies $g(y(1)) \leq g(z(1))$. Using same arguments as before, we
  can write $z(2) \geq g(y(1)) \geq f(y(1)) = x(2)$; therefore, we get
  $y(2) \leq z(2)$. Then, it is immediate by induction that $y(i) \leq
  z(i)$ for all $i \geq 0$.
  
  Similarly, we have $x(1) \leq f(x(0)) = f(y(0)) = y(1)$, where we
  have used the assumption $x(0) = y(0)$. Then, we get $x(1) \leq
  y(1)$. Since $ x(1), y(1) \in \reals^m_{\geq 0}$, the elements in
  matrices $A$ and $B$ are nonnegative, and by the monotonicity of
  $\sqrt{\cdot}$, then $x(1) \leq y(1)$ implies $f(x(1)) \leq
  f(y(1))$. Therefore, we can write $x(2)\leq f(x(1)) \leq
  f(y(1))=y(2)$; thus, we get $x(2) \leq y(2)$. Then, it is immediate
  to show by induction that $ x(i) \leq y(i)$ for all $i \geq 0$, and
  Eq.~\eqref{eq:traj_bound} holds.
 
  We now turn our attention to the second issue, namely boundedness of
  trajectories for System-Z (in Eq.~\eqref{eq:syst_z}). Notice
  that System-Z is a discrete-time linear system. The eigenvalues of
  $A$ are characterized in the following lemma.
\begin{lemma}\label{lemma:real_eig_A}
  The eigenvalues of $A$ are real and with magnitude strictly less
  than $1$ (i.e., $A$ is a stable matrix).
\end{lemma}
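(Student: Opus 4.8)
The plan is to exploit the very special structure of $A$. Writing $q_j:=1-p_j$ and introducing the vectors $\hat\lambda:=(\hat\lambda_1,\dots,\hat\lambda_m)^{\text{T}}$ and $b:=(p_1\bar s_1,\dots,p_m\bar s_m)^{\text{T}}$, one sees at once that $A=Q+\hat\lambda\,b^{\text{T}}$, where $Q:=\operatorname{diag}(q_1,\dots,q_m)$; that is, $A$ is an entrywise nonnegative, ``diagonal plus rank one'' matrix. I will treat reality of the spectrum and the magnitude bound separately, both consequences of this decomposition (and of the standing positivity assumptions $\hat\lambda_j,p_j,\bar s_j>0$; if some $p_j\bar s_j=0$ the corresponding coordinate is split off first).

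For reality: since each $\hat\lambda_j>0$ and each $b_j=p_j\bar s_j>0$, the diagonal matrix $S:=\operatorname{diag}\!\big(\sqrt{\hat\lambda_j/b_j}\big)$ is well defined and invertible, and a one-line computation gives $S^{-1}AS=Q+c\,c^{\text{T}}$ with $c_j:=\sqrt{\hat\lambda_j p_j\bar s_j}$. The right-hand side is symmetric (indeed positive semidefinite), hence $A$, being similar to it, has real and in fact nonnegative eigenvalues.

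For the magnitude bound it suffices to show $\rho(A)<1$. The naive Gershgorin row- or column-sum bounds are not sharp enough, because the stability hypothesis $\varrho<1$ constrains the products $\lambda_\alpha\bar s_\alpha$ rather than the row/column sums of $A$; the trick is to pick a weighting that decouples them. I would use the strictly positive test vector $v$ with $v_\alpha:=\hat\lambda_\alpha/p_\alpha$. Since $b^{\text{T}}v=\sum_j p_j\bar s_j\,\hat\lambda_j/p_j=\sum_j\hat\lambda_j\bar s_j=\varrho$, one gets
\[
(Av)_\alpha=q_\alpha v_\alpha+\hat\lambda_\alpha\,b^{\text{T}}v=\hat\lambda_\alpha\Big(\tfrac{q_\alpha}{p_\alpha}+\varrho\Big)<\hat\lambda_\alpha\cdot\tfrac1{p_\alpha}=v_\alpha,
\]
where the strict inequality is exactly $\varrho<1$ after using $q_\alpha/p_\alpha+1=1/p_\alpha$. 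Thus $Av<v$ componentwise. Taking a left Perron eigenvector $w\ge0$, $w\neq0$, of the nonnegative matrix $A$ (so $w^{\text{T}}A=\rho(A)w^{\text{T}}$), we obtain $\rho(A)\,(w^{\text{T}}v)=w^{\text{T}}(Av)<w^{\text{T}}v$, and since $w^{\text{T}}v>0$ this forces $\rho(A)<1$. Together with the similarity argument, all eigenvalues of $A$ are then real and lie in $[0,1)$, which is the claim.

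I expect the only genuinely delicate point to be finding the weighting $v_\alpha=\hat\lambda_\alpha/p_\alpha$ in the last step — it is what converts the ``one server, coupled classes'' structure back into the single inequality $\varrho<1$, but it is not the first vector one would guess. If one prefers to avoid this guess, an equivalent route is the secular equation: for $\mu\notin\{q_1,\dots,q_m\}$ the eigenvalues of $A$ solve $1=\sum_j \hat\lambda_j p_j\bar s_j/(\mu-q_j)=:\phi(\mu)$, and on the interval $(\max_j q_j,\infty)$ — which contains $1$, since $\max_j q_j=1-\min_j p_j<1$ — the function $\phi$ is strictly decreasing from $+\infty$ to $0$ with $\phi(1)=\sum_j\hat\lambda_j\bar s_j=\varrho<1$, so the largest root is below $1$; interlacing with the $q_j\in[0,1)$ then places the remaining roots in $[0,1)$ as well.
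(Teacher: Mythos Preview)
Your argument is correct, but it follows a genuinely different route from the paper's. The paper works directly with the eigenvalue equation $\hat\lambda_j\,(r\cdot w)+q_jw_j=\mu w_j$: after disposing of the degenerate case $r\cdot w=0$ (where $\mu=q_{j^*}\in(0,1)$), it derives the secular equation $\sum_j r_j\hat\lambda_j/(\mu-q_j)=1$, then shows reality by writing $\mu=a+ib$ and forcing $b=0$, and finally rules out $\mu\ge1$ and $\mu\le-1$ by contradiction, using $\varrho<1$ only at the step $\sum_j r_j\hat\lambda_j/(1-q_j)=\varrho<1$. Your approach instead exploits the diagonal-plus-rank-one structure $A=Q+\hat\lambda b^{\text{T}}$ twice: the similarity $S^{-1}AS=Q+cc^{\text{T}}$ gives reality (and even nonnegativity) of the spectrum in one line, and the well-chosen test vector $v_\alpha=\hat\lambda_\alpha/p_\alpha$ turns $Av<v$ into exactly the inequality $\varrho<1$, after which a standard Perron--Frobenius argument gives $\rho(A)<1$. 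Your method is more structural and slightly stronger (it yields eigenvalues in $[0,1)$ rather than just $(-1,1)$), and it scales cleanly to other rank-one perturbations; the paper's computation is more self-contained and avoids invoking Perron--Frobenius. Note also that your closing ``alternative route'' via the secular equation and interlacing is essentially a compressed version of what the paper actually does.
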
  
\begin{proof}
  Let $w \in \mathbb{C}^m$ be an eigenvector of $A$, and $\mu \in
  \mathbb{C}$ be the corresponding eigenvalue. Then we have $Aw~=~\mu
  w$. Define $r:=(p_1\bar{s}_1, p_2 \bar{s}_2,\ldots, p_m
  \bar{s}_m)$. Then the $m$ eigenvalue equations are
\begin{equation}
\label{SQP_eig_detailed}
\hat \lambda_j\,  w \cdot r + q_jw_j = \mu \, w_j, \quad j\in\{1,\ldots,m\},
\end{equation}
%
%
where $w \cdot r$ is the scalar product of vectors $w$ and $r$, and
$w_j$ is the $j$th component of $w$.

There are two possible cases. If $w \cdot r = 0$, then
Eq.~\eqref{SQP_eig_detailed} becomes $q_j\,w_j = \mu \, w_j$, for all
$j$. Since $w \neq 0$, there exists $j^*$ such that $w_j^* \neq 0$;
thus, we have $\mu = q_{j^*}$. Since $q_{j^*} \in \reals$ and $0 <
q_{j^*} <1$, we have that $\mu$ is real and $|\mu|<1$.

Assume, now, that $w \cdot r \neq 0$. This implies that $\mu \neq q_j$
and $w_j \neq 0$ for all $j$, thus we can write for all $j$
\begin{equation}\label{SQP_eig_detailed_2}
w_j = \frac{\hat \lambda_j}{\mu - q_j} \, w \cdot r
\end{equation}
Therefore 
\[
w_j = \frac{\hat \lambda_j}{\hat \lambda_1}\frac{\mu - q_1}{\mu - q_j}w_1. 
\]
Therefore, \eqref{SQP_eig_detailed_2} can be rewritten as
\begin{equation}\label{eq:SQP_eig_relation}
\sum_{j=1}^m \frac{r_j \hat \lambda_j}{\mu - q_j} = 1.
\end{equation}
Eq.~\eqref{eq:SQP_eig_relation} implies that the eigenvalues are
real. To see this, write $\mu = a + ib$, where $i$ is the imaginary
unit: then
\[
\sum_{j=1}^m \frac{r_j \hat \lambda_j}{a+ib - q_j} = \sum_{j=1}^m \frac{r_j
  \hat \lambda_j [(a-q_j) -ib]}{(a - q_j)^2 + b^2}
\]
Thus Eq.~\eqref{eq:SQP_eig_relation} implies
\[
 b\,\underbrace{\sum_{j=1}^m \frac{r_j \hat \lambda_j}{(a - q_j)^2 + b^2}}_{>0} = 0
\]
that is, $b = 0$. Eq.~\eqref{eq:SQP_eig_relation} also implies
that the eigenvalues (that are real) have magnitude strictly less than
$1$. Indeed, assume, by contradiction, that $\mu\geq 1$, then we would
have $\mu - q_j \geq 1 - q_j>0$ (recall that the eigenvalues are real
and $0<q_j<1$) and we could write
\[
\sum_{j=1}^m \frac{r_j \hat \lambda_j}{\mu - q_j} \leq \sum_{j=1}^m
\frac{r_j \hat \lambda_j}{1 - q_j} =\sum_{j=1}^m \bar s_j \hat \lambda_j =
\varrho < 1,
\]
and we get a contradiction. Assume, again by contradiction, that $\mu
\leq -1$, then we would trivially get another contradiction
$\sum_{j=1}^m r_j \hat \lambda_j/(\mu - q_j) < 0$, since $\mu - q_j <
0$. 
\end{proof}

Hence, $A\in\real^{m\times{m}}$ has eigenvalues strictly inside the
unit disk, and since the eigenvalues of a matrix depend continuously
on the matrix entries, there exists a sufficiently small $\eps>0$ such
that the matrix $A +\eps B $ has eigenvalues strictly inside the unit
disk.  Accordingly, each solution $i\mapsto
z(i)\in\real_{\geq0}^m$ of System-Z converges exponentially
fast to the unique equilibrium point
\begin{equation}\label{eq:fixed_point}
  z^* = 
  \Big(I_m - A -\eps B\Big)^{-1}
  \frac{1}{4\eps} B\mathbf{1}_m.
\end{equation}
Combining Eq.~\eqref{eq:traj_bound} with the previous statement, we
see that the solutions $i\mapsto x(i)$ and $i\mapsto y(i)$ are
bounded.  Thus
  \begin{equation}\label{eq:sup_ineq}
  \limsup_{i\to+\infty}x(i) \leq \limsup_{i\to+\infty}y(i)  < +\infty.
  \end{equation}
 
 Finally, we turn our attention to the third issue, namely the
  computation of $y := \limsup_{i \to +\infty} y(i)$. Taking the
  $\limsup$ of the left- and right-hand sides of
  Eq.~\eqref{eq:barn_evol}, and noting that
  \[
  \limsup_{i\to +\infty} \sqrt{y_{\alpha}(i)} = \sqrt{\limsup_{i\to
      +\infty} y_{\alpha}(i)} \quad \text{for
    ${\alpha}\in\{1,2,\ldots,m\}$},
  \]
  since $\sqrt{\cdot}$ is continuous and strictly monotone increasing
  on $\R_{>0}$, we obtain that
\[
y_{\alpha} = (1-p_{\alpha})y_{\alpha} + \hat \lambda_{\alpha}\,\sum_{j=1}^m
p_j \biggl (\frac{ \btsp \sqrt{|\env|}}{\sqrt{n}v} \sqrt{y_j} + \bar
s_j y_j \biggr).
\]
Rearranging we obtain
\begin{equation}
\label{eq:n_alpha_length}
p_{\alpha} y_{\alpha} =  \hat \lambda_{\alpha}\,\sum_{j=1}^m p_j \biggl (\frac{ \btsp \sqrt{|\env|}}{\sqrt{n}v} \sqrt{y_j} + \bar s_j y_j   \biggr).
\end{equation}
Dividing $p_{\alpha} y_{\alpha}$ by $p_1y_1$ we obtain
\begin{equation}
\label{eq:n_alpha_to_n_1}
y_{\alpha} = \frac{\hat \lambda_{\alpha} p_1}{\hat \lambda_1p_{\alpha}}y_1.
\end{equation}
Combining Eqs.~(\ref{eq:n_alpha_length}) and
(\ref{eq:n_alpha_to_n_1}), we obtain
\begin{align*}
  p_1 y_1 &= \varrho \, p_1 y_1 + \frac{ \btsp
    \sqrt{|\env|}}{\sqrt{n}v} \sqrt{p_1 \hat \lambda_1 y_1}
  \sum_{j=1}^m\sqrt{\hat \lambda_j p_j}
 \end{align*}
 Thus, recalling that $\hat \lambda_{\alpha}=\lambda_{\alpha}/n$, we
 obtain
\[
y_{\alpha} =\frac{ \btsp^2 |\env|}{n^3v^2 (1-\varrho^2)}
\frac{\lambda_{\alpha}}{p_{\alpha}}
\left(\sum_{j=1}^m\sqrt{\lambda_jp_j}\right)^2.
\]
Noting that from Eq.~\eqref{eq:sup_ineq}, $\limsup_{i\to +\infty}
N_{\alpha,i} \leq y_{\alpha}$, we obtain the desired result.
\end{proof}  

\end{document}